\documentclass[letterpaper, 10 pt, conference, anonymous]{ieeeconf} 
\IEEEoverridecommandlockouts    
\usepackage{amsmath, amssymb, amsfonts}
\usepackage{graphicx}
\usepackage{multicol}
\usepackage{soul}
\usepackage{float}
\usepackage[ruled,longend]{algorithm2e}
\usepackage[backend=biber, sorting=none]{biblatex}
\usepackage{mathtools}
\usepackage{stmaryrd}
\usepackage[draft]{changes}
\usepackage{subcaption}
\usepackage{booktabs}
\usepackage{censor}

\newtheorem{theorem}{Theorem}[section]

\newtheorem{proposition}{Proposition}

\definechangesauthor[name={Hemanth}, color=red]{HM}

\title{pdSTL: Probabilistic Differentiable Signal Temporal Logic for Stochastic Systems}
\author{Bennett Dogbey, IEEE Student Member and Hemanth Manjunatha, IEEE Member
\thanks{All Authors are with Dept. of Mechanical and Aerospace Engineering, Oklahoma State University, Stillwater, OK USA}}

\begin{document}
\maketitle
\thispagestyle{empty}
\pagestyle{empty}
\begin{abstract}
Autonomous robots operating in uncertain environments must satisfy complex temporal and safety specifications despite stochastic dynamics and sensing noise. While Signal Temporal Logic (STL) offers robustness measures for gradient-based optimization, existing extensions either lack differentiability or ignore belief-space uncertainty. We introduce pdSTL (probabilistic differentiable Signal Temporal Logic), a framework that unifies probabilistic semantics with differentiable robustness over belief trajectories. pdSTL employs interval-valued probabilistic semantics to compute conservative satisfaction bounds, propagated compositionally through the STL syntax tree. We formulate the temporal robustness evaluation as a recurrent, LSTM-style unfolding of STL operators, enabling linear-time, differentiable monitoring suitable for end-to-end trajectory optimization. We validate pdSTL on simulated obstacle avoidance, lane-change maneuvers, and real-world Crazyflie quadcopter flight experiments under aerodynamic disturbances. Results demonstrate that pdSTL achieves efficient optimization with formal probabilistic guarantees, significantly outperforming deterministic differentiable STL in maintaining safety margins under real-world uncertainty.
\end{abstract}

\section{Introduction}
Autonomous systems are increasingly deployed in safety-critical, uncertain environments where failures carry significant consequences. To operate reliably, these systems must not only accomplish complex, temporally extended tasks but also provide guarantees about safety and performance despite noise, disturbances, and unpredictable interactions. For example, a service robot in a hospital may need to visit patient rooms in a prescribed order, avoid congested corridors during peak hours, and periodically report to a central station, while contending with localization errors and dynamic human motion. Deterministic planning alone is insufficient in such settings. We therefore require formal specification frameworks that can both express rich temporal requirements and explicitly account for uncertainty, while remaining amenable to efficient computation and learning-based control.

Signal Temporal Logic (STL) has emerged as a powerful formal method for specifying and reasoning about spatiotemporal properties of continuous-time systems~\cite{donze2013signal}. STL extends Linear Temporal Logic with dense-time, real-valued signals, enabling precise specifications that describe mission objectives. Critically, STL admits quantitative semantics, called the robustness measure, which provides a continuous scalar score of how well a signal satisfies a specification~\cite{donze2010robust}. This quantitative view makes STL particularly attractive for robotics, where it has been successfully applied to motion planning, autonomous driving, and multi-agent coordination \cite{yuan2025continuous,silano2021power,barbosa2019guiding,puranic2021learning}.

However, real-world robotic and autonomous systems operate under significant uncertainty arising from sensor noise, actuation errors, model mismatch, and unpredictable interactions with dynamic environments. Traditional STL robustness is defined with respect to deterministic system signals and evaluates satisfaction based on nominal executions, thereby neglecting the stochastic nature of real-world system behavior. As a consequence, specifications that appear robust under deterministic semantics may be violated when deployed on physical systems, limiting their practical applicability.

\begin{figure}[t]
\centering
\includegraphics[width=\linewidth]{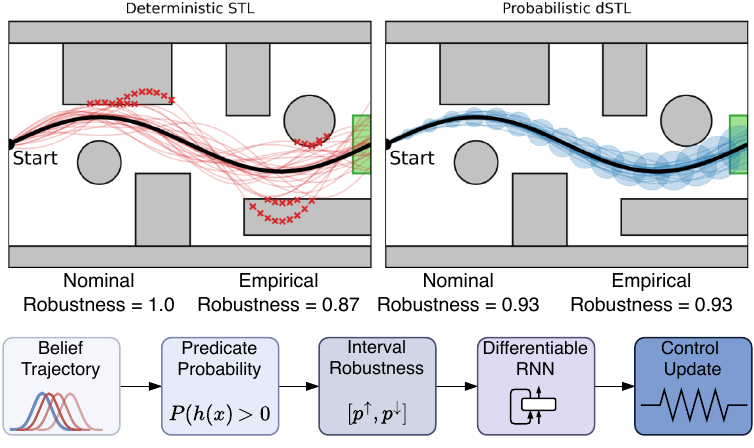}
\caption{The realization-gap problem in stochastic environments. (Top) Deterministic STL planning assumes a single nominal path, ignoring the "dispersion" caused by noise. This leads to a false sense of security (Nominal Robustness $= 1.0$) while the actual robot executions (Empirical Robustness $= 0.87$) collide with obstacles. (Bottom) Our proposed pdSTL framework solves this by optimizing directly over the \textit{belief trajectory}. By propagating probability intervals through a differentiable LSTM-style RNN roll out, pdSTL ensures that the planned safety margin ($0.93$) holds true during actual deployment ($0.93$), effectively closing the gap between theory and real-world performance.}
\label{fig:overview}
\end{figure}

To address this limitation, a growing body of work has extended STL to explicitly reason about uncertainty by incorporating probabilistic semantics. Early works introduced Probabilistic Signal Temporal Logic, interpreting predicate satisfaction as a random variable and defining satisfaction in terms of the probability that a specification holds, enabling controllers to maximize the likelihood of task completion under uncertainty~\cite{yoo2015control,kapoor2016probabilistic,tiger2020incremental}. Related approaches formulated STL synthesis as a probabilistic inference problem, allowing uncertainty to be propagated through the specification while retaining expressive temporal structure~\cite{lee2021signal}. In parallel, other approaches incorporated risk-aware \cite{lindemann2021reactive} and chance-constrained semantics \cite{jha2018safe}, enforcing STL specifications with bounded probabilities of violation through predictive or reactive control strategies~\cite{farahani2017shrinking}. More recently, a stochastic robustness interval was introduced, defining interval-valued probabilistic robustness semantics over continuous-time systems and taking the lower bound as a robustness measure~\cite{ilyes2022stochastic}. However, these methods are typically non-differentiable, preventing their direct use in modern gradient-based learning and optimization pipelines.

To make STL applicable in learning-based scenarios, researchers have explored differentiable formulations of STL robustness. A key development was the STLCG framework, which translated STL robustness formulas into computation graphs, enabling gradients to be backpropagated through temporal operators using automatic differentiation tools~\cite{leung2019backpropagation,leung2023backpropagation}. This formulation allowed STL robustness to be used directly as an optimization objective, enabling learning agents to optimize complex temporal behaviors without manual reward shaping. Subsequent work improved optimization behavior and scalability through smooth robustness approximations, efficient parallel evaluation, and tighter integration with deep learning frameworks, including recurrent architectures and neural-symbolic representations~\cite{meng2023signal,kapoor2025stlcg++,gilpin2020smooth,ma2020stlnet}. However, these differentiable approaches assume deterministic signals and overlook stochasticity.

In this paper, we bridge this gap by introducing probabilistic differentiable Signal Temporal Logic (pdSTL). Our central insight is that probabilistic STL robustness can be formulated as a differentiable recurrent computation—analogous to an LSTM-style unfolding of temporal operators over stochastic signals—thereby preserving probabilistic soundness while enabling end-to-end gradient-based optimization under uncertainty. This formulation makes STL directly applicable to modern learning-based cyber-physical systems operating in uncertain environments (Fig.~\ref{fig:overview}).

\noindent \textbf{Contributions.} Our contributions are: (i) an interval-valued probabilistic robustness semantics for STL over belief trajectories, using conservative probability bounds propagated through the STL syntax; (ii) casting STL evaluation as an LSTM-style RNN, updating lower and upper robustness bounds sequentially for linear-time bounded temporal evaluation while preserving interval consistency; and (iii) integrating this recurrent interval monitor into belief-space trajectory optimization, demonstrating effectiveness in robotic motion planning under temporal safety and reachability constraints with uncertainty.

\begin{figure*}[t]
    \centering
    \includegraphics[width=\linewidth]{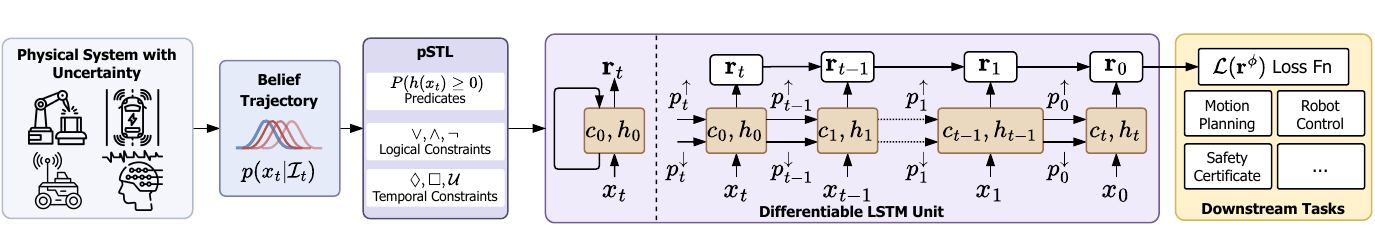}
    \caption{Differentiable probabilistic STL framework. For a given belief state $b$, is transformed into probabilistic STL predicate bounds $(p_t^{\uparrow}, p_t^{\downarrow})$. These bounds are evaluated through a differentiable temporal RNN that computes the conservative STL robustness which could then be used in gradient-based optimization.}
    \label{fig:pipeline}
\end{figure*}
\section{Approach}
In this section, we first formalize discrete-time belief-space dynamics, then define a stochastic robustness measure for temporal logic specifications. Crucially, we implement this evaluation as a recurrent, LSTM-style computation graph, enabling end-to-end differentiable trajectory optimization that propagates probability bounds through time efficiently. 

\subsection{Stochastic System Dynamics}
We consider a discrete-time stochastic dynamical system evolving over a finite time horizon $t = 0,1,\dots,T$. The system state $x_t \in \mathbb{R}^n$ evolves under control input $u_t \in \mathbb{R}^m$ as
\begin{equation}
x_{t+1} = f(x_t,u_t) + G(x_t,u_t)w_t, \qquad
y_t = h(x_t) + v_t,
\end{equation}
where $w_t$ and $v_t$ represent process and measurement noise, respectively, and $h$ may be nonlinear. The functions $f$ and $G$ define the (possibly nonlinear) state transition and noise scaling. Let $\mathcal{I}_t$ denote the information available up to time $t$, including the history of applied controls and observations.
Instead of reasoning over individual realizations of $x_t$, we operate in belief space. The belief at time $t$ is the posterior distribution
\begin{equation}
b_t(x) := p(x_t \mid \mathcal{I}_t).
\end{equation}
Given an initial belief $b_0$ and a candidate control sequence $\mathbf{u} = \{u_0, \dots, u_{T-1}\}$, the stochastic dynamics and observation model induce a predicted belief trajectory
\begin{equation}
\mathcal{B}(\mathbf{u}) = \{ b_0, b_1, \dots, b_T \},
\end{equation}
which captures the evolution of uncertainty over time. The proposed framework operates directly on $\mathcal{B}(\mathbf{u})$ to optimize control sequences under future uncertainty. No parametric assumptions are imposed on the beliefs $b_t$, which may be Gaussian, multimodal, heavy-tailed, or represented non-parametrically (e.g., via particles).

\subsection{Stochastic Robustness Measure}
\label{sec:storm}
Under the discrete-time stochastic dynamics above, each realization of the stochastic state process induced by $\mathcal{B}$ generates a trajectory that may or may not satisfy $\varphi$. Satisfaction therefore becomes a probabilistic event.
A standard approach to uncertainty handling is via chance constraints, which enforce that the probability of violating a constraint remains below a prescribed threshold at each time. However, such formulations do not naturally extend to STL specifications, whose temporal operators couple satisfaction across time intervals. For instance, \textit{eventually} ($\Diamond_{[a,b]}\varphi$) requires satisfaction at some time in $[a,b]$, whereas \textit{always} ($\Box_{[a,b]}\varphi$) requires satisfaction at all times in that interval. Enforcing independent per-time-step chance constraints fails to capture this structural asymmetry.
To address this limitation, we adopt the stochastic robustness interval introduced in \cite{ilyes2022stochastic} and formulate it directly over belief trajectories.
\paragraph{Stochastic Robustness Interval.}
Let $\varphi$ be an STL formula and $\mathcal{B} = {b_t}_{t=0}^T$ a belief trajectory. Following \cite{ilyes2022stochastic}, the stochastic robustness interval (SRI) is defined as
\begin{equation}
\mathbf{r}(\varphi,\mathcal{B}) =
\big[
p^\downarrow(\varphi,\mathcal{B}),
p^\uparrow(\varphi,\mathcal{B})
\big]
\subseteq [0,1],
\end{equation}
with
\begin{equation}
0 \le
p^\downarrow(\varphi,\mathcal{B})
\le
p^\uparrow(\varphi,\mathcal{B})
\le 1.
\end{equation}
The lower bound $p^\downarrow(\varphi,\mathcal{B})$ provides a conservative guarantee on the probability that realizations of the stochastic process satisfy $\varphi$, while $p^\uparrow(\varphi,\mathcal{B})$ gives an upper bound. The stochastic robustness measure (SRM) is then
\begin{equation}
\mathrm{SRM}(\varphi,\mathcal{B}) := \min \mathbf{r}(\varphi,\mathcal{B}).
\end{equation}
By design, the lower bound ($p^\downarrow$) represents a cautious estimate, while the upper bound ($p^\uparrow$) is an optimistic one. Our system is built so that these two never "cross over." If the cautious estimate starts lower than the optimistic one, it will stay that way through every step of the calculation.

Intuitively, the stochastic robustness interval propagates probability bounds along the logical and temporal structure of the STL formula, combining atomic predicates using probability inequalities and temporal operators over their corresponding time windows. If $\mathrm{SRM}(\varphi,\mathcal{B}) = 1$, the formula is satisfied almost surely; intermediate values quantify the guaranteed probability of satisfaction. Unlike deterministic robustness $\mathbf{r}(x,\varphi,t) \in \mathbb{R}$, which measures geometric distance from violation, stochastic robustness captures confidence under uncertainty.
\paragraph{Relation to Prior Work and Our Contribution.}
The stochastic robustness interval follows the StoRM framework of \cite{ilyes2022stochastic}. Our contribution extends this foundation by integrating stochastic robustness into a differentiable optimization framework, enabling gradient-based planning and learning directly in belief space.

\subsection{Differentiable Signal Temporal Logic}
We develop a differentiable framework for evaluating STL specifications over belief trajectories, enabling gradient-based trajectory optimization under probabilistic constraints. The probabilistic semantics in Section~\ref{sec:storm} define robustness intervals via recursive min/max operations, which are conceptually clear but non-differentiable. We extend the STLCG framework \cite{leung2023backpropagation} to operate on probabilistic intervals instead of scalar traces, replacing min/max with smooth approximations and propagating bounds through a recurrent computation graph, achieving linear complexity and end-to-end differentiability (Fig.~\ref{fig:pipeline}). For any formula $\varphi$ and time $t$, define the stochastic robustness interval as
\begin{equation}
\mathbf{r}_t^\varphi
:= [p_t^\downarrow(\varphi),\ p_t^\uparrow(\varphi)]
\in [0,1]^2,
\end{equation}
where $p_t^\downarrow(\varphi)$ and $p_t^\uparrow(\varphi)$ denote lower and upper probability bounds evaluated on the suffix of the predicted belief trajectory $\mathcal{B}_t$. For each atomic predicate of the form $\mu(x_t) \ge 0$, we compute the probability of satisfaction under the belief $b_t(x) = p(x_t \mid \mathcal{I}_t)$ as
\begin{equation}
p_t = \mathbb{P}_{x \sim b_t}(\mu(x) \ge 0),
\end{equation}
which corresponds to the probability mass of the feasible region under the belief distribution.

For Gaussian beliefs, $b_t = \mathcal{N}(\mu_t, \Sigma_t)$, and linear predicates $a^\top x \le c$, this probability is available in closed form:
\begin{equation}
\mathbb{P}(a^\top x \le c)
=
\Phi\!\left(
\frac{c - a^\top \mu_t}{\sqrt{a^\top \Sigma_t a}}
\right).
\end{equation}

For rectangular goal or obstacle regions, predicates are decomposed into conjunctions of half-space constraints, and Fr\'echet bounds are used to conservatively combine probabilities across dimensions without assuming independence. These probabilities serve as inputs to the stochastic robustness interval computation described above.

All operations are differentiable with respect to $(\mu_t, \Sigma_t)$, enabling gradients to propagate from the probabilistic STL objective through belief dynamics to control inputs.

\subsubsection{\bf{Non-Temporal Operators}}
Non-temporal operators (negation, conjunction, disjunction) act element-wise operation across the signal and do not require temporal aggregation. Negation swaps and complements the lower and upper bounds of the interval:
\begin{equation}
\mathbf{r}_t^{\neg\varphi}
= [1 - p_t^\uparrow(\varphi),\ 1 - p_t^\downarrow(\varphi)]
\end{equation}
Conjunction and disjunction are computed using Fr\'echet bounds to conservatively combine probabilities without independence assumptions:
\begin{equation}
\begin{aligned}
\mathbf{r}_t^{\varphi_1 \wedge \varphi_2}
&=
\Big[
\max\!\big(0,\,
p_t^\downarrow(\varphi_1)
+
p_t^\downarrow(\varphi_2)
-
1\big),
\\[-2pt]
&\qquad
\min\!\big(
p_t^\uparrow(\varphi_1),
p_t^\uparrow(\varphi_2)
\big)
\Big],
\\[6pt]
\mathbf{r}_t^{\varphi_1 \vee \varphi_2}
&=
\Big[
\max\!\big(
p_t^\downarrow(\varphi_1),
p_t^\downarrow(\varphi_2)
\big),
\\[-2pt]
&\qquad
\min\!\big(
1,\,
p_t^\uparrow(\varphi_1)
+
p_t^\uparrow(\varphi_2)
\big)
\Big].
\end{aligned}
\end{equation}
To enable smooth differentiable computation, the min and max operations are replaced with the log-sum-exponential  approximation~\cite{leung2023backpropagation}. As the smoothing parameter $\beta \to \infty$, the relaxed operators converge to the exact SRI bounds.

\subsubsection{\bf{Temporal Operators}}
Temporal operators aggregate probability intervals over a bounded window $[a,b]$. To maintain $O(T)$ complexity while handling the future-looking nature of STL, we reformulate the evaluation as a \textit{backward recurrent computation}. By leveraging the RNN-style computation graph architecture proposed in \cite{leung2023backpropagation}, we compute robustness values in a single pass over the trajectory. Starting from $t=T$ and moving to $t=0$, we maintain a shift-register $z_t^\varphi$ representing the required future window:
\begin{equation}
z_t^\varphi =
\begin{bmatrix}
\mathbf{r}_{t+a}^\varphi \\
\mathbf{r}_{t+a+\Delta t}^\varphi \\
\vdots \\
\mathbf{r}_{t+b}^\varphi
\end{bmatrix}
=
\begin{bmatrix}
c_{t+a}^\varphi & h_{t+a}^\varphi \\
\vdots & \vdots \\
c_{t+b}^\varphi & h_{t+b}^\varphi
\end{bmatrix}
\end{equation}

where $W = \lfloor (b-a)/\Delta t \rfloor + 1$ is the window size. As $t$ decreases, the state is updated via:
\begin{equation}
z_{t-\Delta t}^\varphi = M z_t^\varphi + B \mathbf{r}_{t+a-\Delta t}^\varphi
\end{equation}
with $B = [1, 0, \dots, 0]^\top$ and $M$ being the standard down-shift matrix. This sliding window allows the robustness at time $t$ to be computed via a smooth reduction over the current register $z_t^\varphi$ in constant time. The construction for bounded intervals $[a,b]$ extends naturally to unbounded cases by modifying the window extraction logic within the recurrent state.

\paragraph{Always and Eventually Operators}  
The output of bounded temporal operators is obtained by applying a smooth reduction across the hidden state:
\begin{equation}
\begin{aligned}
\mathbf{r}_t^{\Box_{[a,b]}\varphi}
&=
\begin{bmatrix}
\widetilde{\min}_\beta\{p_s^\downarrow(\varphi)\}_{s\in[t+a,t+b]} \\
\widetilde{\min}_\beta\{p_s^\uparrow(\varphi)\}_{s\in[t+a,t+b]}
\end{bmatrix} \\
\mathbf{r}_t^{\Diamond_{[a,b]}\varphi}
&=
\begin{bmatrix}
\widetilde{\max}_\beta\{p_s^\downarrow(\varphi)\}_{s\in[t+a,t+b]} \\
\widetilde{\max}_\beta\{p_s^\uparrow(\varphi)\}_{s\in[t+a,t+b]}
\end{bmatrix}
\end{aligned}
\end{equation}
This recurrence preserves linear complexity while maintaining differentiability via smooth min/max relaxations.

\paragraph{Until Operator}
The bounded until operator $\varphi_1 \,\mathcal{U}_{[a,b]}\,\varphi_2$ requires that $\varphi_2$ holds at some future time $k \in [t+a, t+b]$, and $\varphi_1$ holds continuously from the current time $t$ until $k$. We implement this as a coupled recurrence:
\begin{equation}
\begin{split}
p_t^\downarrow(\varphi_1 \mathcal{U}_{[a,b]} \varphi_2) = \widetilde{\max}_{k \in [t+a, t+b]} \Big( \widetilde{\min} \big( p_k^\downarrow(\varphi_2), \\
\widetilde{\min}_{s \in [t, k]} p_s^\downarrow(\varphi_1) \big) \Big)
\end{split}
\end{equation}
An analogous computation is performed for the upper bound $p_t^\uparrow$. By re-using the shifted suffix values in the recurrence, we preserve linear-time evaluation.

\subsection{Illustrative Example}
To concretely demonstrate the sliding-window computation, we trace the evaluation of $\Box_{[1,2]}(x \ge 50)$ on a discrete piecewise signal (Fig.~\ref{fig:piecewise_example}). Let the predicate probability intervals be $\mathbf{r}_t = [p_t^\downarrow, p_t^\uparrow]$. The \emph{Always} operator with window $[1, 2]$ dictates a sliding window of size $W = 2$. At each step $t$, the recurrent shift register extracts the future window $\{t+1, t+2\}$ and applies the $\widetilde{\min}$ reduction to output the aggregated interval bounds, as detailed in Table~\ref{tab:predicate_window}.

\begin{figure}[t]
\centering
\includegraphics[width=\linewidth]{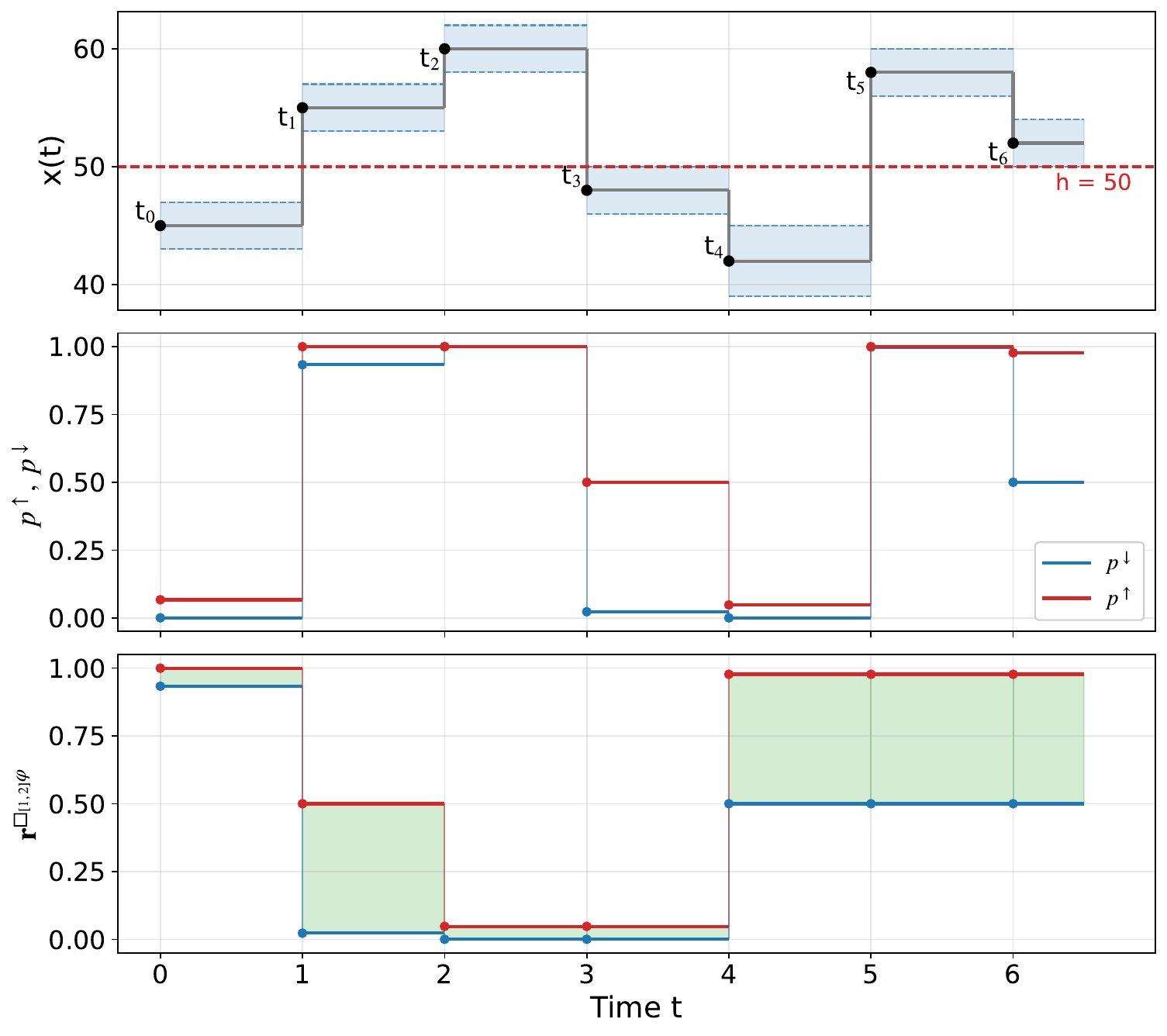}
\caption{$\Box_{[1,2]}(x \ge 50)$ valuation of the probabilistic STL monitor.
Top: Piecewise signal with uncertainty bounds and threshold.
Middle: Predicate satisfaction probability bounds.
Bottom: Temporal operator output computed via recurrent aggregation.}
\label{fig:piecewise_example}
\end{figure}

\begin{table}[htpb]
\centering
\caption{Predicate Probability Intervals and Sliding-Window Evaluation for $\Box_{[1,2]}\varphi$}
\label{tab:predicate_window}
\resizebox{\columnwidth}{!}{%
\addtolength{\tabcolsep}{-2pt}
\begin{tabular}{c c c c c}
\toprule
$t$ & $p_t^\downarrow$ & $p_t^\uparrow$ & Suffix $s \in [t+1, t+2]$ & $\mathbf{r}_t^{\Box_{[1,2]}\varphi}$ \\
\midrule
0 & 0.0002 & 0.0668 & $\{1, 2\}$ & [0.9332, 0.9998] \\
1 & 0.9332 & 0.9998 & $\{2, 3\}$ & [0.0228, 0.5000] \\
2 & 1.0000 & 1.0000 & $\{3, 4\}$ & [0.0001, 0.0478] \\
3 & 0.0228 & 0.5000 & $\{4, 5\}$ & [0.0001, 0.0478] \\
4 & 0.0001 & 0.0478 & $\{5, 6\}$ & [0.5000, 0.9772] \\
5 & 0.9987 & 1.0000 & $\{6, 7\}$ & --- \\
6 & 0.5000 & 0.9772 & ---        & --- \\
\bottomrule
\end{tabular}%
}
\end{table}

\section{Results and Discussion}
We now demonstrate the efficacy of the pdSTL framework in three cases. First,  we test the framework with obstacle avoidance then test on a lane change maneuver. Finally, we present a real-world validation experiment.

\subsection{2D Motion Planning}
We consider a 2D motion planning scenario where an autonomous agent must navigate a cluttered environment under stochastic uncertainty (Fig.~\ref{fig:2d_motion_planning}a)). The agent is modeled as a discrete-time stochastic single integrator:
$x_{t+1} = x_t + u_t \Delta t + w_t, \quad w_t \sim \mathcal{N}(0,Q),\ u_t \in [-u_{\max}, u_{\max}]^2$,
with state $x_t \in \mathbb{R}^2$ and control $u_t$. The workspace contains a goal region, an intermediate visit region $V$, and static obstacles $\mathcal{O} = {O_1,\dots,O_N}$. The mission is formalized as the STL specification:

\begin{equation}
\varphi = \Diamond_{[0,T]} (x \in G) \wedge \Diamond_{[0,T]} (x \in V) \wedge \Box_{[0,T]} \bigwedge_{O_i \in \mathcal{O}} (x \notin O_i).
\end{equation}
Trajectory synthesis is cast as unconstrained optimization over the control sequence $\mathbf{u}_{0:T-1}$ by minimizing:

\begin{equation}
\mathcal{J}(\mathbf{u}_{0:T-1}) = -\log\big(\text{SRM}(\varphi,\mathcal{B}) + \varepsilon\big) + \lambda_u \sum{t}|u_t|^2,
\end{equation}

Given an initial belief $b(0)$, horizon $T$, and learning rate $\eta$, the control sequence is refined iteratively by backpropagating gradients through the differentiable pdSTL graph.

\begin{figure}[th!]
\centering
\includegraphics[width=\linewidth]{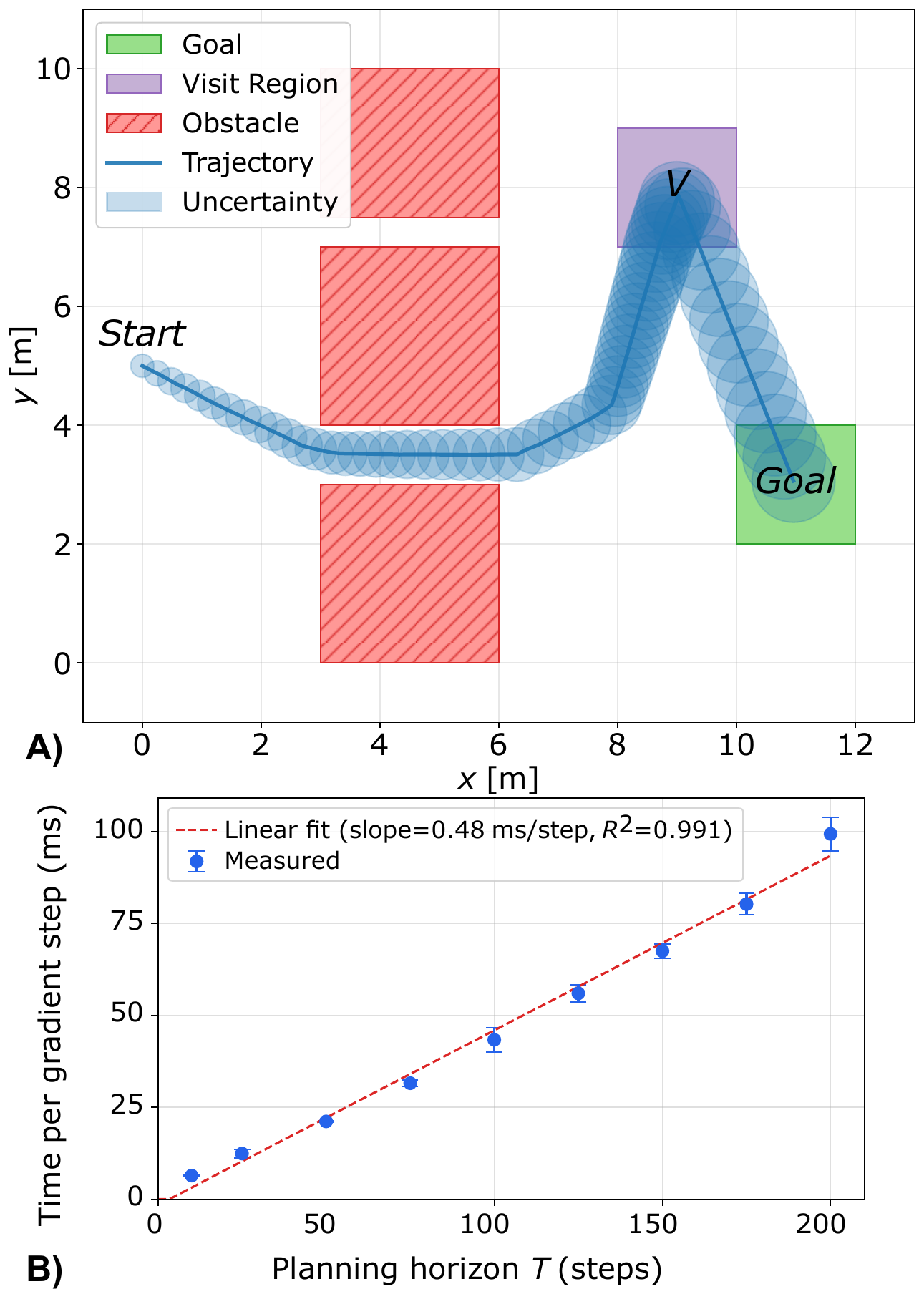}
\caption{A) Belief-space trajectory optimization under pdSTL. Blue ellipses indicate propagated uncertainty. The differentiable monitor maintains a robust satisfaction margin, guiding the trajectory away from obstacles while satisfying temporal objectives. B) Gradient computational time vs. planning horizon varying linearly.}
\label{fig:2d_motion_planning}
\end{figure}
We compare pdSTL against a deterministic STLCG baseline. Both planners produce fixed open-loop sequences, which are tested under $N=1000$ Monte Carlo rollouts with a process noise ($\sigma_q = 0.03$). While nominal distances to obstacles are similar, the deterministic plan is brittle (Table~\ref{tab:single_shot_comparison}): its probabilistic robustness drops to $0.834$, yielding 87–88\% success and safety rates. In contrast, pdSTL explicitly optimizes the probability of satisfaction over the belief trajectory, achieving a planning-time guarantee of $0.930$ that aligns with empirical performance (success 97.8\%, safety 98.4\%), maintaining larger obstacle clearance. Also, as shown in Fig.~\ref{fig:2d_motion_planning}b), the computation time per gradient step increases linearly with the planning horizon $T$.

\begin{table}[t]
\centering
\caption{Comparison of deterministic (STLCG) vs probabilistic (pdSTL) planning ($N=1000$ Monte Carlo trials). }
\label{tab:single_shot_comparison}
\begin{tabular}{lcc}
\toprule
Metric & STLCG & pdSTL \\
\midrule
$\eta$ (nominal distance) & 0.200 & 0.199 \\
SRM (planning) & - & 0.930 \\
SRM deployed & 0.834 & 0.930 \\
MC success rate & 87.2\% & 97.8\% \\
MC safety rate & 87.7\% & 98.4\% \\
Min obstacle clearance (m) & $0.153 \pm 0.122$ & $0.259 \pm 0.097$ \\
\bottomrule
\end{tabular}
\end{table}

\subsection{Lane-Change for Autonomous Vehicle}

\begin{figure}[ht!]
\centering
\includegraphics[width=\linewidth]{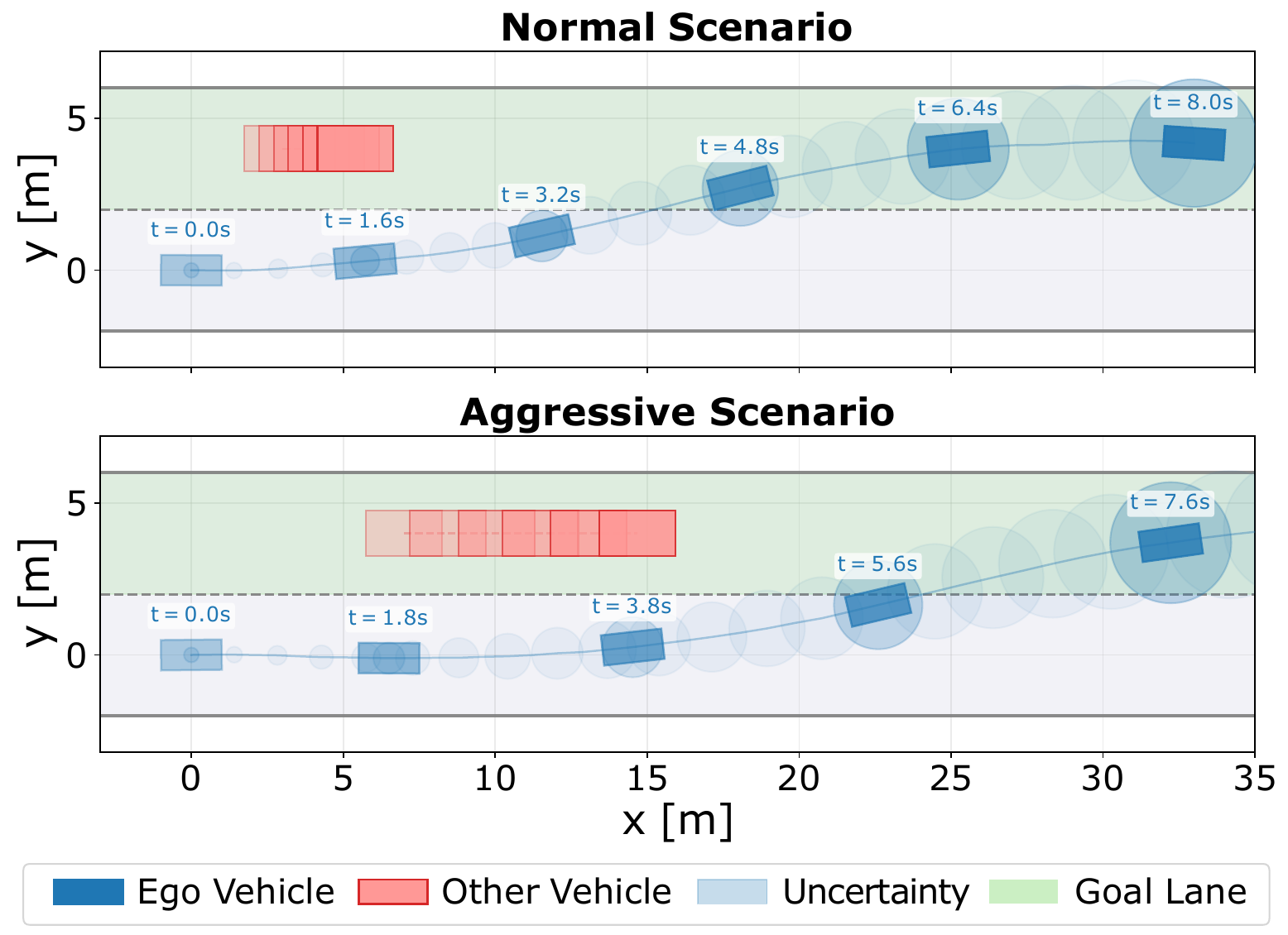}
\caption{Lane-merging behavior across two scenarios. Top: normal scenario where the obstacle moves at $0.3$\,m/s. Bottom: aggressive scenario where the obstacle moves at $0.8$\,m/s, reducing the merge opportunity.}
\label{fig:lane_change}
\end{figure}

We consider a lane-merging scenario where an ego vehicle must safely merge into a target lane while avoiding a moving obstacle. The ego vehicle follows stochastic double-integrator dynamics:
\begin{equation}
\begin{aligned}
x_t &= [p_x, p_y, v_x, v_y]^\top, \\
u_t &= [a_x, a_y]^\top, \quad \|u_t\|_\infty \le u_{\max} = 2.5~\text{m/s}^2.
\end{aligned}
\end{equation}
with timestep $\Delta t = 0.2$ s and additive Gaussian position noise ($\sigma_q = 0.01$ m). The obstacle moves at constant velocity ($0.3$ or $0.8$ m/s depending on the scenario) along the target lane from $p_{\text{obs},0} = [3.0, 4.0]^\top$ m, and a collision occurs if $|p - p_{\text{obs}}|_2 < 2.25$ m.
The ego vehicle uses pdSTL in a Model Predictive Control fashion: at each step, a finite-horizon problem ($H=40$ steps) is solved, the first control applied, and the horizon shifted forward, up to $T_{\text{sim}} = 100$ steps. 
Two representative scenarios are evaluated. In the \emph{normal scenario}, the obstacle starts at $p_{\text{obs},0}=[3.0,4.0]^\top$\,m.
In the \emph{aggressive scenario}, the obstacle starts further ahead at $p_{\text{obs},0}=[7.0,4.0]^\top$\,m and moves faster, reducing the available merge opportunity. The lane-change specification is given by:
\begin{equation}
\varphi_{\text{merge}} =
\Diamond_{[0,H]}(|p_y - y_{\text{target}}| \le 2) \wedge
\Box_{[0,H]}(|p - p_{\text{obs}}|_2 \ge 2.25),
\end{equation}
requiring eventual alignment with the target lane while maintaining a probabilistic safety margin.

Figure~\ref{fig:lane_change} illustrates the closed-loop execution of the lane-change maneuver under the two scenarios. In the normal scenario, the obstacle moves slowly along the target lane, allowing the ego vehicle to merge once sufficient separation is achieved. In contrast, the aggressive scenario reduces the available merge window by introducing a faster-moving obstacle, requiring the planner to reason more carefully about the timing of the maneuver. In both cases, the pdSTL controller delays the merge while the predicted belief trajectory overlaps the obstacle’s safety region, initiating the lateral transition only when the probabilistic safety margin becomes feasible. This behavior results in trajectories that naturally adapt to the available space and timing of the maneuver. While the aggressive scenario operates with reduced spatial margins, the optimizer still converges to a solution satisfying $\rho_{\downarrow}(\varphi) \ge 0.90$, certifying that the high-speed maneuver maintains the same probabilistic safety guarantee as the more conservative scenario. For a planning horizon of $H=40$, a complete forward–backward pass of the pdSTL monitor requires $0.49$\,ms on an Intel i9 CPU.

\subsection{Real-World Obstacle Avoidance}
\begin{figure}[t]
    \centering
    \includegraphics[width=\linewidth]{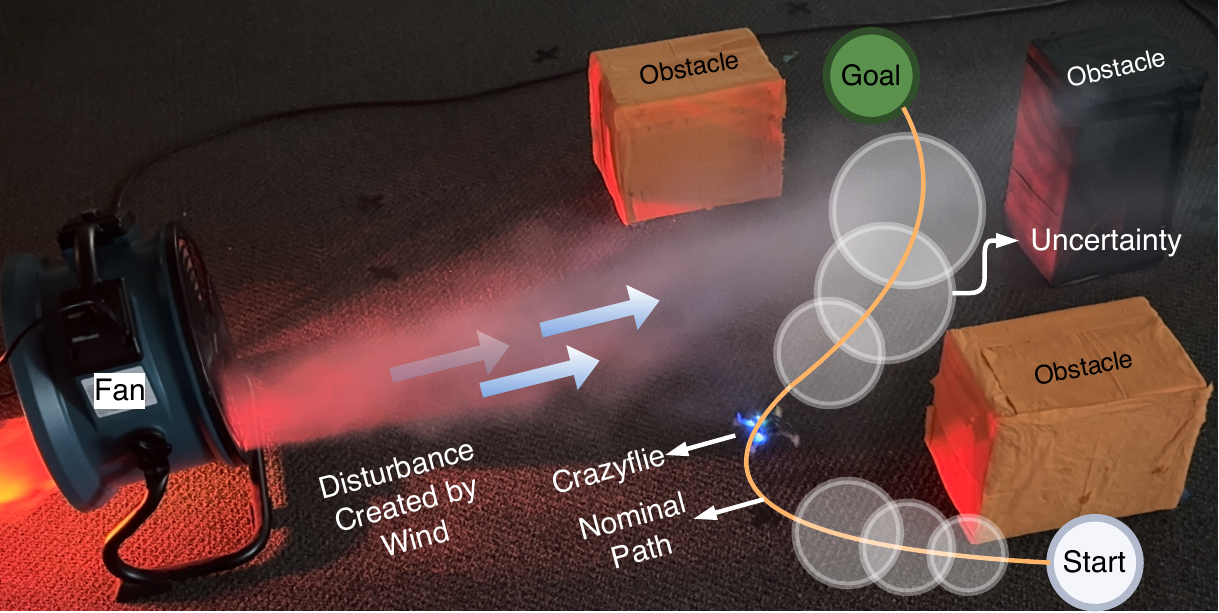}
    \caption{Experimental setup for real-world validation of pdSTL. A Crazyflie quadrotor navigates from a Start position to a defined Goal while avoiding multiple Obstacles. To evaluate the system’s robustness under stochastic dynamics, a fan is used to introduce a disturbance  across the flight area.}
    \label{fig:real-world-example}
\end{figure}
We validate the pdSTL framework using a Crazyflie quadrotor performing an obstacle avoidance task under real aerodynamic disturbances. Experiments were conducted in a $2\,\text{m} \times 2\,\text{m}$ netted arena using Lighthouse-based localization and ROS2 Humble (Fig.~\ref{fig:real-world-example}). By maintaining a constant altitude, the task is reduced to planar motion where the state $\hat{p}_t = [\hat{x}_t,\hat{y}_t]^\top$ is estimated via an onboard EKF. To bridge the gap between the physical system and our probabilistic semantics, we modeled the vehicle belief $b_t$ as a Gaussian distribution $\mathcal{N}(\hat{p}_t, \Sigma_t)$. While the mean $\hat{p}_t$ follows the nominal planned state, the covariance $\Sigma_t$ was pre-characterized by recording tracking error residuals under various fan intensities during preliminary flight trials. This empirical noise model allows the pdSTL optimizer to reason over the predicted dispersion of the quadrotor trajectories during the offline optimization phase. The environment consists of three rectangular obstacles $O_i$ and a goal region $G$, with the mission specified as:
\begin{equation}
\varphi = \Diamond_{[0,T]} (\hat{p}_t \in G) \;\wedge\; \Box_{[0,T]} \bigwedge_{i=1}^{3} (\hat{p}_t \notin O_i).
\end{equation}

We conducted a comparative study between two planning strategies: (i) Deterministic Optimization, a baseline assuming noise-free dynamics, and (ii) pdSTL Optimization (Ours), which maximizes the conservative lower-bound satisfaction probability with a target threshold of $\mathbf{r}_{\downarrow} \ge 0.95$. Trajectories were optimized offline and executed on hardware using identical waypoint tracking controllers and gains to ensure the only variable was the robustness formulation. To introduce stochasticity, a calibrated fan was placed at the arena boundary ($\approx 1.5$~m from the obstacles) to generate persistent lateral airflow. We evaluated three intensity levels (Settings 1-3), where Setting 1 corresponds to $\approx 1.35$~m/s, and Settings 2 and 3 reach $\approx 2$~m/s and $3.5$~m/s, respectively. Prior to optimization, we initialized the belief state covariance based on empirical disturbance measurements collected for each fan setting, using $\Sigma_0 = 0.001\mathrm{m}^2$, $0.006\mathrm{m}^2$, and $0.02\mathrm{m}^2$ for Settings 1–3, respectively. Performance is summarized in Table~\ref{tab:results} based on success rate (\%), mean obstacle clearance (cm), and total safety violations.
\begin{table}[t]
  \centering
  \caption{Safety rate (\%), minimum obstacle clearance (cm), and violation count across 20 runs per fan setting.
  Safety ($\%$): $= (N-\text{violations})/N \times 100\,\%$.
  Clearance: mean across runs (violated/crashed runs assigned 0\,cm.)}
  \label{tab:results}
  \setlength{\tabcolsep}{4pt}
  \begin{tabular}{c ccc ccc}
    \toprule
      & \multicolumn{3}{c}{Deterministic}
      & \multicolumn{3}{c}{pdSTL (ours)} \\
    \cmidrule(lr){2-4}\cmidrule(lr){5-7}
    Fan Level
      & \shortstack{Success\\(\%)} 
      & \shortstack{Clearance\\(cm, $\uparrow$)} 
      & Viols.
      & \shortstack{Success\\(\%)} 
      & \shortstack{Clearance\\(cm, $\uparrow$)} 
      & Viols. \\
    \midrule
    1        & $95.0$  & $3.4 \pm 2.0$ & 1
             & \textbf{$100.0$} & \textbf{$7.9 \pm 1.9$} & \textbf{0} \\
    2        & $95.0$  & $4.4 \pm 2.2$ & 1
             & \textbf{$100.0$} & \textbf{$8.8 \pm 1.8$} & \textbf{0} \\
    3        & $70.0$  & $4.6 \pm 4.1$ & 6
             & \textbf{$85.0$}  & \textbf{$4.5\pm 4.4$} & \textbf{3} \\
    \bottomrule
  \end{tabular}
\end{table}

As disturbance increases, planner performance diverges. Under mild airflow (Fan Levels~1--2), both planners achieve high success rates, but pdSTL consistently maintains larger safety margins. At Fan Level~1, the deterministic planner achieves $95\%$ success with $3.4 \pm 2.0$\,cm mean clearance, whereas pdSTL achieves $100\%$ success with $7.9 \pm 1.9$\,cm. A similar trend appears at Fan Level~2, where both planners succeed frequently but pdSTL maintains nearly double the clearance. 

At higher disturbance (Fan Level~3), the deterministic planner degrades more rapidly, reaching $70\%$ success with six safety violations, while pdSTL maintains $85\%$ success with only three violations. Although clearance decreases for both planners, the probabilistic approach safely completes more trials, demonstrating that incorporating probabilistic STL semantics improves robustness by explicitly reasoning over belief uncertainty.

\section{Limitations}
While pdSTL provides a differentiable framework for probabilistic reasoning over belief trajectories, several avenues for refinement remain. First, the current formulation does not explicitly regulate covariance dynamics; the optimizer maximizes the lower bound of satisfaction probability based on predicted uncertainty but does not incorporate active covariance-shaping or information-seeking objectives. Consequently, uncertainty is mitigated through path selection rather than being actively reduced via control. 

Furthermore, the pdSTL's performance is naturally coupled to the accuracy of the noise model $\Sigma_t$. Currently estimated offline, $\Sigma_t$ may lead to over-optimistic safety thresholds if abrupt environmental changes exceed pre-characterized residuals. Integrating online adaptive noise estimation or dual-control to update $\Sigma_t$ in real-time is a vital extension to maintain probabilistic guarantees under model mismatch.

Second, while Fr\'echet bounds ensure compositional soundness without assuming predicate independence, they can introduce excessive conservativeness in the presence of strong state or temporal dependencies. Future work will explore tighter analytical bounds or differentiable sampling-based approximations to reduce this pessimism. 

Finally, although our recurrent formulation is linear-time, scalability is still constrained by the maximum temporal window size $W$. For very long-horizon missions with dense constraints, memory overhead remains a challenge; this could be mitigated through hierarchical abstractions or more aggressive receding-horizon schemes.
\section{Conclusions}
In this work, we introduced probabilistic differentiable Signal Temporal Logic (pdSTL), a framework that bridges formal temporal reasoning with gradient-based optimization in belief space. By defining interval-valued probabilistic semantics and implementing them as a recurrent, sliding-window computation, we enabled linear-time, differentiable monitoring of complex specifications under uncertainty. 

Our validation across simulated motion planning, lane-merging maneuvers, and real-world quadrotor experiments demonstrates that pdSTL effectively helps bridge the "realization gap" inherent in deterministic planners. Unlike traditional differentiable STL approaches, pdSTL produces trajectories where empirical success rates align closely with theoretical satisfaction guarantees. While the current framework focuses on optimizing paths within a predicted belief evolution, future research will explore active covariance-shaping and dual-control strategies to proactively reduce uncertainty.

\section*{Appendix}
\label{appendix:theory}
We state formal properties of the pdSTL monitor and provide brief proof sketches.

\begin{proposition}[Interval validity]
\label{prop:interval_validity_app}
Assume every atomic predicate $\mu$ yields bounds
$0 \le p_t^\downarrow(\mu) \le p_t^\uparrow(\mu) \le 1$ for all $t$.
Then for any formula $\varphi$ constructed using
$\neg,\wedge,\vee,\Box_{[a,b]},\Diamond_{[a,b]}$,
pdSTL produces bounds satisfying
\[
0 \le p_t^\downarrow(\varphi) \le p_t^\uparrow(\varphi) \le 1
\quad \forall t.
\]
The temporal cases follow because bounded temporal operators apply min/max reductions to collections of values already lying in $[0,1]$, thereby preserving interval validity.
\end{proposition}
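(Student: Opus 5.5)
The plan is structural induction on the formula $\varphi$, with induction hypothesis $0 \le p_t^\downarrow(\psi) \le p_t^\uparrow(\psi) \le 1$ for every $t$ and every proper subformula $\psi$. The base case is exactly the assumption on atomic predicates. For each operator I will check two things. First, both output bounds lie in $[0,1]$. Second, lower $\le$ upper. I will treat the exact min/max semantics first and discuss the smoothed operators $\widetilde{\min}_\beta,\widetilde{\max}_\beta$ separately.

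\emph{Negation.} From $p^\downarrow \le p^\uparrow$ in $[0,1]$ we get $1-p^\uparrow \le 1-p^\downarrow$, and both values stay in $[0,1]$.

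\emph{Conjunction.} The lower bound $\max(0,p_1^\downarrow+p_2^\downarrow-1)$ is $\ge 0$. Since $p_1^\downarrow+p_2^\downarrow-1\le p_i^\downarrow$ for each $i$, this gives
$\max(0,p_1^\downarrow+p_2^\downarrow-1)\le \min(p_1^\downarrow,p_2^\downarrow)\le \min(p_1^\uparrow,p_2^\uparrow)\le 1$.

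\emph{Disjunction.} We have $\max(p_1^\downarrow,p_2^\downarrow)\ge 0$. Also $\max(p_1^\downarrow,p_2^\downarrow)\le \max(p_1^\uparrow,p_2^\uparrow)\le \min(1,p_1^\uparrow+p_2^\uparrow)$, where the last step uses nonnegativity of the summands and $\max(p_1^\uparrow,p_2^\uparrow)\le 1$.

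\emph{Temporal operators.} $\Box_{[a,b]}$ and $\Diamond_{[a,b]}$ apply the same reduction (min or max) coordinatewise to $\{p_s^\downarrow\}$ and $\{p_s^\uparrow\}$ over the same index window. Min and max are monotone in each argument, so $p_s^\downarrow\le p_s^\uparrow$ for all $s$ in the window gives ordered outputs. Outputs lie in $[0,1]$ because a min or max of $[0,1]$-valued numbers lies between their extremes. The shift-register recurrence only copies stored values and does not change them, so it computes exactly these window reductions; I will note this by a short induction over the backward recurrence. For $t$ near $T$ the window is empty, and I will adopt the convention of the table and leave the output undefined there.

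\emph{Main obstacle: the smooth relaxations.} Log-sum-exp is monotone in each argument, so ordering is preserved: applying the same $\widetilde{\min}_\beta$ to a pointwise larger vector gives a larger result. Range is not preserved in general. For $n$ values in $[0,1]$, $\widetilde{\max}_\beta$ can exceed the true maximum by up to $\log(n)/\beta$, and $\widetilde{\min}_\beta$ can drop below the true minimum by the same amount. So $p^\uparrow>1$ or $p^\downarrow<0$ is possible. I would handle this in one of two ways. Preferably, state the result for the exact semantics, which is the $\beta\to\infty$ limit. Alternatively, define the smoothed operators with a final clamp to $[0,1]$, or use the normalized weighted-average (softmax-weighted) form. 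The weighted-average form is a convex combination of the inputs, so it stays in their range. Its monotonicity is less obvious, though, so clamping composed with monotone LSE is the cleaner choice. Clamping is monotone, so it preserves the ordering, and I would state the proposition for that version.
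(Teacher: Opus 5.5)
Your structural induction (exact-semantics checks for $\neg,\wedge,\vee$, and the min/max-reduction argument for $\Box_{[a,b]},\Diamond_{[a,b]}$) is the paper's argument, and for the exact semantics it is correct. Your conjunction step via $\min(p_1^\downarrow,p_2^\downarrow)$ is more explicit than the paper's. The problem is the version you say you would actually state, namely clamping composed with log-sum-exp: it is false for $\wedge$ and $\vee$.

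Monotonicity of $\widetilde{\min}_\beta,\widetilde{\max}_\beta$ gives ordering only when the \emph{same} reduction is applied to the lower and upper vectors, i.e.\ in the temporal case. In the Fr\'echet bounds the two endpoints are different reductions of different arguments. The smoothing errors also point in opposite directions ($\widetilde{\max}_\beta\ge\max$, $\widetilde{\min}_\beta\le\min$), so for $\wedge$ the lower endpoint is pushed up and the upper one down. Take both operands equal to $[0,0]$. The lower endpoint is $\widetilde{\max}_\beta(0,-1)=\frac{1}{\beta}\log(1+e^{-\beta})>0$ and the upper is $\widetilde{\min}_\beta(0,0)=-\frac{\log 2}{\beta}<0$, so after clamping the interval is inverted for every $\beta>0$. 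Dually, $\vee$ inverts on operands $[1,1]$. Clamping restores range but cannot undo a crossing, and every enclosing operator inherits the inverted interval, so the induction fails. The paper's saturation remark has the same blind spot.

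You can either keep your exact-semantics statement, or make the smoothed Fr\'echet endpoints bracket the exact ones before clamping. One way is to subtract $\frac{\log 2}{\beta}$ from the smoothed lower endpoint and add it to the smoothed upper one, which works because $\max\le\widetilde{\max}_\beta\le\max+\frac{\log n}{\beta}$. Another is to use the softmax-weighted average for $\wedge,\vee$: as a convex combination it underestimates $\max$ and overestimates $\min$. Its non-monotonicity is harmless there, since this step needs no monotonicity. Ordering then follows from the exact case, and your monotonicity argument still handles $\Box,\Diamond$.
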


\begin{proof}[Proof Sketch]
We proceed by structural induction on $\varphi$.
For negation,
$[p^\downarrow,p^\uparrow] \mapsto [1-p^\uparrow,\,1-p^\downarrow]$,
which preserves the unit interval and ordering.
For conjunction, the Fr\'echet lower bound
$\max(0, p_1^\downarrow + p_2^\downarrow - 1)$ and upper bound
$\min(p_1^\uparrow, p_2^\uparrow)$ both lie in $[0,1]$. Moreover, since
$p_i^\downarrow \le p_i^\uparrow$,
\[
\max(0, p_1^\downarrow + p_2^\downarrow - 1)
\le
\min(p_1^\uparrow, p_2^\uparrow),
\]
so ordering is preserved. The disjunction case is analogous.
By definition, the probability bounds are restricted to the unit interval $[0,1]$. Numerical values outside this range due to smooth approximations are projected back onto  $[0,1]$ via a saturation operator.
\end{proof}

\begin{proposition}[Monotonicity of smooth operators]
\label{prop:monotonicity_app}
Let $\widetilde{\max}_\beta$ denote the log-sum-exp relaxation and define
$\widetilde{\min}_\beta(x) := -\widetilde{\max}_\beta(-x)$.
If $x_i \le y_i$ for all $i$, then
\[
\widetilde{\max}_\beta(x_1,\dots,x_n)
\le
\widetilde{\max}_\beta(y_1,\dots,y_n),
\]
and similarly for $\widetilde{\min}_\beta$.
\end{proposition}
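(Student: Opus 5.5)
The log-sum-exp relaxation is $\widetilde{\max}_\beta(x_1,\dots,x_n) = \frac{1}{\beta}\log\sum_{i=1}^n e^{\beta x_i}$ with $\beta>0$, and the plan is to prove monotonicity of this map first and then obtain the $\widetilde{\min}_\beta$ case from the defining identity $\widetilde{\min}_\beta(x) = -\widetilde{\max}_\beta(-x)$. The argument is a composition of monotone maps, so no appeal to Proposition~\ref{prop:interval_validity_app} is needed.

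\emph{Step 1 (max case).} For each $i$, since $\beta>0$ and the exponential is strictly increasing, $x_i \le y_i$ gives $e^{\beta x_i} \le e^{\beta y_i}$. Summing these $n$ inequalities between positive terms yields
\[
0 < \sum_{i=1}^n e^{\beta x_i} \le \sum_{i=1}^n e^{\beta y_i}.
\]
Both sums are strictly positive, so the logarithm is defined on each. It is increasing on $(0,\infty)$, and multiplying by $1/\beta>0$ preserves order, which gives $\widetilde{\max}_\beta(x)\le\widetilde{\max}_\beta(y)$.

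As a cross-check one can compute the gradient, $\partial \widetilde{\max}_\beta/\partial x_j = e^{\beta x_j}/\sum_i e^{\beta x_i} \in (0,1)$. Integrating this along the segment from $x$ to $y$, which is coordinatewise nondecreasing, gives the same conclusion, but the direct argument above already suffices.

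\emph{Step 2 (min case).} From $x_i \le y_i$ for all $i$ we get $-y_i \le -x_i$ for all $i$. Applying Step 1 to the vectors $-y$ and $-x$ gives $\widetilde{\max}_\beta(-y) \le \widetilde{\max}_\beta(-x)$. Negating both sides reverses the inequality, so
\[
\widetilde{\min}_\beta(x) = -\widetilde{\max}_\beta(-x) \le -\widetilde{\max}_\beta(-y) = \widetilde{\min}_\beta(y).
\]

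There is no substantive obstacle here. The only points needing care are that $\beta>0$ must be fixed, since for $\beta<0$ the order reverses, and that both vectors must have the same length $n$, which the statement ensures. If the implementation appends the saturation onto $[0,1]$ from the proof of Proposition~\ref{prop:interval_validity_app}, monotonicity is still preserved, because clipping is a nondecreasing map and a composition of nondecreasing maps is nondecreasing.
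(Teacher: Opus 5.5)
Your proof is correct and matches the paper's argument essentially step for step: monotonicity of $e^{\beta\cdot}$ for $\beta>0$, summation of the $n$ inequalities, monotonicity of $\log$ (with your added $1/\beta>0$ step), and then the $\widetilde{\min}_\beta$ case via the duality $\widetilde{\min}_\beta(x)=-\widetilde{\max}_\beta(-x)$. One trivial nit in your optional gradient cross-check: for $n=1$ the partial derivative equals $1$, so it lies in $(0,1]$ rather than $(0,1)$, but this does not affect the argument.
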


\begin{proof}[Proof Sketch]
For each $i$, $\exp(\beta x_i) \le \exp(\beta y_i)$, hence
$\sum_i \exp(\beta x_i) \le \sum_i \exp(\beta y_i)$. Taking $\log(\cdot)$
preserves inequality. The min result follows by duality.
\end{proof}

\begin{theorem}[Consistency with exact SRI as $\beta\to\infty$]
\label{thm:beta_consistency_app}
Let $\widetilde{\max}_\beta$ be defined by
$\widetilde{\max}_\beta(x_1,\dots,x_n)=\frac{1}{\beta}\log\sum_i e^{\beta x_i}$,
and $\widetilde{\min}_\beta(x) := -\widetilde{\max}_\beta(-x)$.
For any finite input set,
\[
\widetilde{\max}_\beta(x_1,\dots,x_n)\to \max_i x_i,
\qquad
\widetilde{\min}_\beta(x_1,\dots,x_n)\to \min_i x_i
\]
as $\beta\to\infty$.
Consequently, the relaxed pdSTL bounds converge pointwise to the exact SRI
bounds for $\wedge,\vee,\Box,\Diamond$ as $\beta\to\infty$.
\end{theorem}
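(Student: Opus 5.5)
The proof has two parts. First we obtain an explicit, uniform sandwich bound for the log-sum-exp relaxation. Then we push it through the finite composition of operators by structural induction.

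For the first part, let $m=\max_i x_i$ and factor out $e^{\beta m}$:
\[
\widetilde{\max}_\beta(x_1,\dots,x_n)=m+\frac{1}{\beta}\log\sum_i e^{\beta(x_i-m)} .
\]
The sum lies in $[1,n]$, because the maximizing term contributes $1$ and every term is at most $1$. Hence
\[
m\le \widetilde{\max}_\beta(x_1,\dots,x_n)\le m+\frac{\log n}{\beta},
\]
and letting $\beta\to\infty$ gives $\widetilde{\max}_\beta\to\max_i x_i$. The error bound $\log n/\beta$ does not depend on the inputs. Applying this to $-x$ and using $\widetilde{\min}_\beta(x)=-\widetilde{\max}_\beta(-x)$ gives
\[
\min_i x_i-\frac{\log n}{\beta}\le\widetilde{\min}_\beta(x_1,\dots,x_n)\le \min_i x_i .
\]

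For the second part, we run structural induction on $\varphi$ for a fixed finite horizon $T$ and fixed predicate bounds. The claim is that each relaxed bound $\tilde p_t^{\downarrow}(\varphi),\tilde p_t^{\uparrow}(\varphi)$ converges to the exact $p_t^{\downarrow}(\varphi),p_t^{\uparrow}(\varphi)$ as $\beta\to\infty$. For atomic predicates nothing is relaxed. For negation the map $p\mapsto 1-p$ is continuous, so convergence transfers directly.

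The $\wedge$ and $\vee$ cases need the key observation that the two smoothing errors can be controlled separately. Write $\widetilde{\max}_\beta(y_\beta)-\max(y)$ as the sum of two pieces:
\[
\big[\widetilde{\max}_\beta(y_\beta)-\max(y_\beta)\big]+\big[\max(y_\beta)-\max(y)\big].
\]
The first piece is at most $\log n/\beta$ by the uniform bound above. The second tends to $0$ because $\max$ is $1$-Lipschitz in the sup norm and $y_\beta\to y$ by the induction hypothesis. Here the arguments $y_\beta$ are the Fréchet expressions $0,\ \tilde p_1^\downarrow+\tilde p_2^\downarrow-1$, $1,\ \tilde p_1^\uparrow+\tilde p_2^\uparrow$, and the children's bounds themselves. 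These are affine in the children's bounds, so they converge. The same argument applies to $\widetilde{\min}_\beta$.

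For $\Box_{[a,b]}$ and $\Diamond_{[a,b]}$ the reduction runs over at most $W$ entries of the shift register. Those entries are the children's relaxed bounds, so the identical estimate applies with $n=W$. Because $t$ ranges over finitely many steps, the convergence holds pointwise in $t$ and indeed uniformly in $t$. The saturation projection onto $[0,1]$ from Proposition~\ref{prop:interval_validity_app} is $1$-Lipschitz and is the identity on the exact values, which already lie in $[0,1]$. Inserting it therefore does not affect the limit.

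The main obstacle is to make sure the composition argument is genuinely valid. The inner arguments of each smooth operator themselves depend on $\beta$, so convergence of $\widetilde{\max}_\beta$ at fixed inputs is not sufficient. This is why we use the uniform error bound $\log n/\beta$, which holds independently of the inputs, together with the Lipschitz continuity of the exact $\max$ and $\min$. With these two facts the induction closes, and the total error at depth $d$ is bounded by a sum of $O(\log W/\beta)$ terms plus terms vanishing by hypothesis.
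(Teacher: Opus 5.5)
Your proof is correct and follows the same route as the paper's sketch: the sandwich bound $\max_i x_i \le \widetilde{\max}_\beta(x) \le \max_i x_i + \log n/\beta$, duality for $\widetilde{\min}_\beta$, and propagation through the finitely many nodes of the formula tree. Your extra step---pairing the input-independent $\log n/\beta$ error with the $1$-Lipschitz continuity of the exact $\max$, $\min$, and saturation to handle the $\beta$-dependent inner arguments---is exactly the justification the paper leaves implicit in its one-line ``convergence holds at each node.''
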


\begin{proof}[Proof Sketch]
Log-sum-exp satisfies the standard bound
\[
\max_i x_i
\le
\widetilde{\max}_\beta(x)
\le
\max_i x_i + \frac{\log n}{\beta},
\]
so $\widetilde{\max}_\beta(x)\to \max_i x_i$ as $\beta\to\infty$.
The min case follows by duality. Since pdSTL composes finitely many such
operators over finite windows, convergence holds at each node of the formula
tree.
\end{proof}

\begin{proposition}[Linear-time bounded monitoring]
\label{prop:linear_time_app}
For a fixed STL formula $\varphi$ with bounded temporal intervals and horizon $T$,
pdSTL evaluation requires $O(T|\varphi|)$ time and $O(W|\varphi|)$ memory, where
\[
W = \max_{[a,b]\in\varphi}\Big(\big\lfloor (b-a)/\Delta t \big\rfloor + 1\Big)
\]
is the maximum window size over all bounded temporal operators in $\varphi$.
\end{proposition}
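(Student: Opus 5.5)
We prove the statement by structural induction on $\varphi$, charging a constant amount of work per node of the syntax tree per time step. We view pdSTL evaluation as a bottom-up pass over the parse tree of $\varphi$, which has $|\varphi|$ nodes. At each node the monitor produces the interval sequence $\{\mathbf{r}_t^{\psi}\}_{t=0}^{T}$ for the subformula $\psi$ rooted there. The invariant to establish is that each node computes its full output sequence in $O(T)$ time, given its children's sequences. With $O(W)$ working memory, this gives $O(T)$ time per node. Summing over the $|\varphi|$ nodes gives $O(T|\varphi|)$ time and $O(W|\varphi|)$ memory. Throughout, $\varphi$ is fixed, so $W$ is a constant independent of $T$.

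\textbf{Leaves and non-temporal nodes.} For atomic predicates, each $p_t$ is a single closed-form evaluation such as $\Phi(\cdot)$ of an affine functional of $(\mu_t,\Sigma_t)$. For rectangles, it is a fixed number of Fr\'echet combinations of such evaluations. Either way this is $O(1)$ per step and $O(T)$ per leaf. For $\neg$, $\wedge$, $\vee$, the updates given in Section~II-C are pointwise in $t$. Each update is a constant number of additions, a smooth $\widetilde{\max}_\beta$ or $\widetilde{\min}_\beta$ on two arguments, and a saturation (as in the proof of Proposition~\ref{prop:interval_validity_app}). So these nodes also cost $O(1)$ per step and need only $O(1)$ working memory.

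\textbf{Temporal nodes $\Box_{[a,b]}$ and $\Diamond_{[a,b]}$.} This is the step where linearity in $T$ rather than $TW$ must be argued, and I expect it to be the main obstacle. The backward recurrence $z_{t-\Delta t}^\psi = M z_t^\psi + B\,\mathbf{r}^\psi_{t+a-\Delta t}$ stores $W_\psi \le W$ interval pairs, so it uses $O(W)$ memory per operator. The shift should be implemented as a circular buffer, so that each update costs $O(1)$ rather than the $O(W)$ of an explicit matrix product. The output at $t$ is a reduction over the $W_\psi$ entries of the register. Done naively this costs $O(W_\psi)$, which would give $O(TW|\varphi|)$ time. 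I would argue that the log-sum-exp reduction is incrementally maintainable. Keep a running sum $S_t=\sum_{s\in[t+a,t+b]} e^{\beta p_s}$. Sliding the window changes it by one added term and one removed term, so $S_{t-\Delta t} = S_t + e^{\beta p_{t+a-\Delta t}} - e^{\beta p_{t+b}}$. Then $\widetilde{\max}_\beta = \beta^{-1}\log S_t$ is available in $O(1)$, and $\widetilde{\min}_\beta$ follows by the duality $\widetilde{\min}_\beta(x)=-\widetilde{\max}_\beta(-x)$. We maintain one such sum for each of the lower and upper bounds, which is a constant factor. If exact min/max were required instead, I would fall back on a monotone deque, which achieves amortized $O(1)$ per step. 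Alternatively, I would state the bound under the convention that $W$ is a constant of the fixed formula, in which case $O(W)=O(1)$ and the naive reduction already suffices.

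\textbf{Summing up.} Every node costs $O(T)$ time and at most $O(W)$ memory. Summing over the $|\varphi|$ nodes yields the claimed $O(T|\varphi|)$ time and $O(W|\varphi|)$ memory. The input and output sequences are streamed in the single backward pass, so they are not counted in working memory. The backward pass of automatic differentiation mirrors the forward graph, so it inherits the same time bound up to a constant factor.
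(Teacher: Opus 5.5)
Your proof is correct and follows the same route as the paper's sketch: a constant amount of work per syntax-tree node per time step, a shift register holding at most $W$ interval pairs per bounded temporal operator, and a sum over the $|\varphi|$ nodes. Your handling of the window reduction is more careful than the paper's, which simply asserts that the smooth reduction over the register takes constant time. Either the incremental update of $\sum_s e^{\beta p_s}$ or treating $W$ as a constant of the fixed formula is what actually justifies that assertion.
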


\begin{proof}[Proof Sketch]
Each syntax tree node is evaluated once per time step.
Bounded temporal operators reuse a window state (shift register) updated in
constant time per step. Thus time is $O(T)$ per node, giving $O(T|\varphi|)$
overall. Memory stores window states per node, yielding $O(W|\varphi|)$.
\end{proof}

\begin{theorem}[Differentiability of pdSTL]
\label{thm:differentiability_app}
Assume the belief dynamics
$b_{t+1} = \mathrm{update}(b_t, u_t)$ are differentiable with respect to $u_t$,
and the predicate satisfaction probability $p_t(\mu)$ is continuously
differentiable with respect to the belief parameters. Then $\mathrm{SRM}(\varphi, \mathcal{B})$ is almost everywhere smoothly differentiable with respect to the 
control sequence for any fixed $\beta>0$.
\end{theorem}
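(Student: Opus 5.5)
The plan is to view $\mathrm{SRM}(\varphi,\mathcal{B})$ as a finite composition of maps from the control sequence $\mathbf{u}$ to a scalar, and to apply the chain rule node by node along the syntax tree. Because the horizon $T$ and the formula $\varphi$ are both finite, Proposition~\ref{prop:linear_time_app} shows the computation graph has finitely many nodes. It therefore suffices to show two things: every node is either $C^1$ (indeed $C^\infty$) or is non-smooth only on a Lebesgue-null set of its inputs; and these exceptional sets pull back to a null set in $\mathbf{u}$-space.

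\textbf{Step 1 (belief layer).} Unroll $b_{t+1}=\mathrm{update}(b_t,u_t)$ from the fixed $b_0$. By induction on $t$, each belief parameter vector (e.g.\ $(\mu_t,\Sigma_t)$) is a differentiable function of $(u_0,\dots,u_{t-1})$. Composing with the $C^1$ map $b_t\mapsto p_t(\mu)$ makes every atomic bound a $C^1$ function of $\mathbf{u}$. In the Gaussian case the closed form $\Phi\big((c-a^\top\mu_t)/\sqrt{a^\top\Sigma_t a}\big)$ is smooth wherever $a^\top\Sigma_t a>0$.

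\textbf{Step 2 (logical and temporal layers).} Negation is affine. The smoothed operators $\widetilde{\max}_\beta$ and $\widetilde{\min}_\beta$ are $C^\infty$ for fixed $\beta>0$, since log-sum-exp is a composition of $\exp$, a finite sum and $\log$ on a positive argument. The shift-register update $z_{t-\Delta t}=Mz_t+B\mathbf{r}$ is linear. The until recurrence is a nesting of the same smooth reductions. By the chain rule, every such node preserves $C^\infty$ regularity, so smoothness propagates upward through the tree. Proposition~\ref{prop:monotonicity_app} is not needed for differentiability, but it confirms that the relaxed operators remain well behaved.

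\textbf{Step 3 (the main obstacle).} Two ingredients remain genuinely nonsmooth. The first is the Fréchet clips, $\max(0,\cdot)$ and $\min(1,\cdot)$; if these are themselves smoothed by log-sum-exp, as the text states, they fall under Step 2. The second is the saturation projection onto $[0,1]$ from Proposition~\ref{prop:interval_validity_app}, together with the outer $\min\mathbf{r}$ defining $\mathrm{SRM}$. The latter in fact equals $p^\downarrow$ by interval validity, so it reduces to a single smooth branch. Each remaining kink is a piecewise-$C^1$ selection of the form $g=\max(h_1,h_2)$ with $h_i\in C^1$. Such a $g$ is differentiable at every $\mathbf{u}$ where the active branch is unique. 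The delicate part is showing that the tie sets $\{h_1=h_2\}$ have measure zero. Generically this holds when $\nabla(h_1-h_2)\neq 0$ on the tie set, by the implicit function theorem. In general it can fail: for instance, on an open set where the clip is active identically, the map is constant there, and hence still differentiable. So I would argue the weaker fact that $g$ is locally Lipschitz and apply Rademacher's theorem, which gives differentiability almost everywhere outright. Since a finite composition of locally Lipschitz maps is locally Lipschitz, Rademacher applied to the whole map $\mathbf{u}\mapsto\mathrm{SRM}$ finishes the proof. Away from the finitely many switching surfaces, where all selections are locally constant, the map is moreover $C^\infty$ in the sense of Step 2, which justifies the word ``smoothly.''
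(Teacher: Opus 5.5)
Your proof is sound in its main line but takes a different route from the paper. The paper's sketch treats the monitor as consisting only of affine maps and log-sum-exp reductions, applies the chain rule, and asserts that the exceptional set (``coincident smooth extrema in the relaxed reductions'') is null. Log-sum-exp is smooth everywhere, so that set does not locate the actual nonsmoothness. The genuinely nonsmooth pieces are the saturation onto $[0,1]$ used in Proposition~\ref{prop:interval_validity_app} and the outer $\min$ in $\mathrm{SRM}$, and the paper never addresses them. You isolate exactly these pieces and handle them with Rademacher's theorem. This gives almost-everywhere differentiability without having to show that any tie set is null, which is a real gain in rigor over the paper. One side remark: $\min\mathbf{r}=p^\downarrow$ is not guaranteed after smoothing. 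For a conjunction whose four atomic bounds all equal $q<\beta^{-1}\log 2$, the smoothed lower bound is $\beta^{-1}\log(1+e^{\beta(2q-1)})>0$, while the smoothed upper bound is $q-\beta^{-1}\log 2<0$. Your Lipschitz argument absorbs the outer $\min$ anyway.

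Two steps need repair.

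\textbf{Local Lipschitz continuity.} Rademacher requires $\mathbf{u}\mapsto\mathrm{SRM}$ to be locally Lipschitz, and the stated hypotheses do not give this. The dynamics are only assumed differentiable; like the paper, you also tacitly need joint differentiability in $(b_t,u_t)$ to unroll the recursion. So in Step~1, composing with the $C^1$ predicate map gives a merely differentiable function of $\mathbf{u}$, not a $C^1$ one. You can fix this in either of two ways:
\begin{itemize}
\item assume $C^1$ dynamics; or
\item replace Rademacher by Stepanov's theorem. The monitor is globally Lipschitz (log-sum-exp is $1$-Lipschitz in $\ell^\infty$, and affine maps and clips are Lipschitz). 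Its composition with a differentiable map is therefore pointwise Lipschitz at every point, hence differentiable almost everywhere.
\end{itemize}

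\textbf{The qualifier ``smoothly''.} Rademacher or Stepanov yields only almost-everywhere differentiability. Your closing sentence claims $C^\infty$ regularity away from ``finitely many switching surfaces''. That presupposes two things you do not have: $C^\infty$ dynamics, and switching sets that are null hypersurfaces, which is exactly what you conceded may fail. So ``smoothly'' is not established; the paper's sketch merely asserts it as well. One way to close this is to assume real-analytic branches. This holds, for instance, for linear-Gaussian beliefs with $\Phi$-type predicates, and the zero set of a real-analytic function that is not identically zero on a connected open set is null.
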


\begin{proof}[Proof Sketch]
By assumption, Jacobians of the belief update and predicate probabilities exist.
The pdSTL monitor composes affine transformations with smooth log-sum-exp
relaxations, which are smooth for $\beta>0$. Therefore the mapping
$\mathbf{u}\mapsto \mathrm{SRM}(\varphi,\mathcal{B}(\mathbf{u}))$ is
differentiable except at measure-zero points corresponding to coincident smooth
extrema in the relaxed reductions. Hence the entire monitor defines a
differentiable computation graph supporting backpropagation.
\end{proof}

\printbibliography

\end{document}